\newenvironment{proof}{\paragraph{Proof:}}{\hfill$\square$}
\newtheorem{theorem}{Theorem}[section]
\newtheorem{corollary}{Corollary}[theorem]
\newtheorem{lemma}[theorem]{Lemma}
\pgfplotsset{compat=newest}
\pgfplotsset{plot coordinates/math parser=false}
\pgfplotsset{
    every non boxed x axis/.style={
        xtick align=center,
        enlarge x limits=true,
        x axis line style={line width=0.8pt, -latex}
},
    every boxed x axis/.style={}, enlargelimits=false
}
\pgfplotsset{
    every non boxed y axis/.style={
        ytick align=center,
        enlarge y limits=true,
        y axis line style={line width=0.8pt, -latex}
},
    every boxed y axis/.style={}, enlargelimits=false
}
\newcommand{\Prob}{\mathbb{P}}
\title{A distribution-free valid p-value for finite samples of bounded random variables}
\author{Joaquin Alvarez\footnote{ joaquin.alvarez@itam.mx}}
\date{
ITAM, Mexico
}
\begin{document}
\maketitle
\begin{abstract}
We build a valid p-value based on the Pelekis-Ramon-Wang inequality for bounded random variables introduced in \cite{pelekis2015bernsteinhoeffding}. The motivation behind this work is the  calibration of predictive algorithms in a distribution-free setting. The super-uniform p-value is tighter than Hoeffding's and Bentkus's alternatives \emph{in certain regions}. Even though we are motivated by a calibration setting in a machine learning context, the ideas presented in this work are also relevant in classical statistical inference. Furthermore, we compare the power of a collection of valid p-values for bounded losses, which are presented in \cite{angelopoulos2022learn}. 
\end{abstract}

\maketitle

\section{Introduction}

In the context of hypothesis testing problems, valid p-values (also known in the literature as super-uniform p-values) offer test statistics that can be used controlling any user-specified significance level. In particular, valid p-values are useful because many multiple testing algorithms use them as an input \cite{Barber_2015, dmitrienko2010multiple}. Moreover, recent literature has introduced frameworks where valid p-values play a key role to provide rigorous statistical guarantees and uncertainty quantification for a variety of machine learning models \cite{liang2022integrative, angelopoulos2022learn, Bates_2023, angelopoulos2023predictionpowered}.

In this work we build upon a concentration inequality from \cite{pelekis2015bernsteinhoeffding}, and construct a valid p-value for the calibration of predictive algorithms and multiple testing algorithms. We call this valid p-value the  PRW valid p-value (due to Pelekis, Ramon and Wang).

Our work is motivated by applications in predictive inference, particularly, it is motivated by recent work on distribution-free uncertainty quantification for black-box algorithms \cite{angelopoulos2022learn, bates2021distributionfree}. On a broader perspective, this work relies on the idea of leveraging concentration inequalities and converting them to obtain valid p-values \cite{bates2021distributionfree}. Our contribution is mainly concerned  with the details of such strategy for a specific concentration inequality. There are variety of concentration inequalities \cite{sridharan2002gentle} for which these kind of ideas can be replicated with adequate modifications and considerations.

Consider a collection of $n$ independent and identically distributed random variables bounded in $[0,1]$, $L_1,L_2,\dots,L_n$. We denote $R\coloneq\mathbb{E}(L_i)\in (0,1)$. We use $\hat{R}$ to denote the sample mean, namely, $\hat{R}\coloneqq \frac{1}{n}\sum_{i=1}^{n}L_i$. For example,  the random sample $\{L_i\}_{i=1}^{n}$ may be associated to the losses of a machine learning model in some calibration set, as in \cite{angelopoulos2022learn}. In such setting we refer to $\hat{R}$ as the empirical risk and $R$ as the theoretical risk. We assume that the latter one is unknown. We may be interested in a hypothesis testing problem:


\begin{equation}
    H_0:R>\alpha\text{ vs. }H_1: R\leq \alpha. \label{hyp}
\end{equation}
Our objective is to determine if there is statistical significance that a pre-trained predictive algorithm producing losses $\{L_i\}_{i=1}^{n}$  on a calibration set has a risk (expected loss) below some specified level $\alpha\in (0,1)$.

The motivation for this work from the perspective of classical statistical inference is a setting where we would like to do a hypothesis test for the mean of a collection of $n$ i.i.d. random variables  as in \ref{hyp}, where we do not know any other information about the  generating distribution other than the fact that its support is a subset of the interval $[0,1]$. The random variables can be either discrete of continuous.

\section{Main results}
Let us begin considering Theorem 1.8 from \cite{pelekis2015bernsteinhoeffding}, for the special case where the involved random variables are all identically distributed.
\begin{theorem}

Let $X_1,\dots,X_n$ be independent and identically distributed (i.i.d.) random variables such that $0\leq X_i\leq 1$ and $p\coloneqq \mathbb{E}[X_i]$. Let $n\in \mathbb{N}$, $p\in(0,1)$ and $Bin(n,p)$ denote a Binomial random variable with parameters $(n,p)$. Then for any positive integer $t$ such that $np<t\leq n$,\footnote{The authors state it with $t<n$, but we can get convinced that the bound also holds for $t=n$. See the appendix for a further discussion.}

$$ \mathbb{P}\Big(\sum_{i=1}^n X_i\geq t\Big)\leq \frac{t-tp}{t-np}\Prob\{Bin(n,p)\geq t\}.$$
\end{theorem}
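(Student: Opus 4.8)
The plan is to reduce the statement to a purely Binomial inequality by a convex-order comparison, and then to establish that inequality, which I expect to be the crux.

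First I would show that among all distributions on $[0,1]$ with mean $p$ the two-point law on $\{0,1\}$ (that is, $Bin(1,p)$) dominates each $X_i$ in the convex order. This is immediate from the chord inequality: for convex $\phi$ and $x\in[0,1]$ one has $\phi(x)\le(1-x)\phi(0)+x\phi(1)$, so taking expectations gives $\mathbb{E}[\phi(X_i)]\le(1-p)\phi(0)+p\phi(1)=\mathbb{E}[\phi(Y_i)]$ with $Y_i\sim Bin(1,p)$. Since the convex order is stable under sums of independent summands (condition on one coordinate and use that a partial expectation of a convex function is again convex), it follows that $S_n\coloneqq\sum_{i=1}^n X_i\le_{cx}B$, where $B\sim Bin(n,p)$; that is, $\mathbb{E}[\phi(S_n)]\le\mathbb{E}[\phi(B)]$ for every convex $\phi$.

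Next I would convert this domination into a tail bound by testing it against one well-chosen convex function. The indicator $\mathbf{1}_{\{x\ge t\}}$ is not convex, but on all of $\mathbb{R}$ it is dominated by the convex ramp $g(x)=(x-t+1)^+$ (indeed $g(x)\ge 1$ for $x\ge t$ and $g\ge 0$ everywhere). Hence $\mathbb{P}(S_n\ge t)\le\mathbb{E}[g(S_n)]\le\mathbb{E}[g(B)]=\sum_{k\ge t}(k-t+1)\,\mathbb{P}(B=k)$, and it suffices to prove the purely Binomial inequality
\[
\sum_{k\ge t}(k-t+1)\,\mathbb{P}(B=k)\ \le\ \frac{t-tp}{t-np}\,\mathbb{P}(B\ge t),
\]
which, after subtracting $\mathbb{P}(B\ge t)$ from both sides, is equivalent to the cleaner conditional mean-excess bound
\[
\mathbb{E}\big[(B-t)^+\big]\ \le\ \frac{p\,(n-t)}{t-np}\,\mathbb{P}(B\ge t).
\]

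Placing the kink at $t-1$ is not arbitrary: for $t=n$ both displayed inequalities become equalities (the first reading $p^n=p^n$, the second $0=0$), which is precisely the regime where the Theorem is sharp, since $\{S_n\ge n\}$ forces every $X_i$ to equal $1$ and $\mathbb{P}(X_i=1)\le\mathbb{E}[X_i]=p$. I would prove the conditional mean-excess inequality using the tail-sum identity $\mathbb{E}[(B-t)^+]=\sum_{j>t}\mathbb{P}(B\ge j)$ together with the log-concavity of the Binomial tails, which makes the ratios $\mathbb{P}(B\ge j+1)/\mathbb{P}(B\ge j)$ decreasing and lets one dominate the tail sum by a geometric series; this reduces the whole statement to bounding the single tail ratio $\mathbb{P}(B\ge t+1)/\mathbb{P}(B\ge t)$ by $p(n-t)/\big(t(1-p)\big)$. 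Alternatively, the Beta integral representation $\mathbb{P}(B\ge t)=\frac{n!}{(t-1)!(n-t)!}\int_0^p u^{t-1}(1-u)^{n-t}\,du$ turns the statement into a monotone integral comparison. This last step is where I expect the real work to lie: the factor $\frac{t-tp}{t-np}$ is tight only at $t=n$ and carries genuine slack for $t<n$, so the argument must control how fast the Binomial tail decays above $t$ without overshooting the clean closed form. The remaining ingredients, the chord inequality and the stability of the convex order, are routine.
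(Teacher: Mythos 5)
Your proposal is correct, but it does not mirror the paper: the paper never proves this inequality for $np<t<n$ at all --- it imports it as Theorem 1.8 of the cited Pelekis--Ramon--Wang reference --- and its only original contribution to this statement is the appendix argument for the endpoint $t=n$, done by a direct case split ($\mathbb{P}(\sum_i X_i\ge n)=0$ in the continuous case, and $\mathbb{P}(\sum_i X_i\ge n)=\mathbb{P}(X_1=1)^n\le p^n$ in the discrete case). You give a self-contained proof, and every step of it closes. In particular, the step you flag as ``the real work'' is a short computation rather than a genuine obstacle: since the pmf ratios $\mathbb{P}(B=k+1)/\mathbb{P}(B=k)=\frac{(n-k)p}{(k+1)(1-p)}$ are decreasing in $k$, one gets for every $j\ge t$ that $\mathbb{P}(B\ge j+1)\le\frac{\mathbb{P}(B=j+1)}{\mathbb{P}(B=j)}\,\mathbb{P}(B\ge j)\le r\,\mathbb{P}(B\ge j)$ with $r\coloneqq\frac{(n-t)p}{t(1-p)}$ (this bypasses even the need for log-concavity of the tail sequence itself, only the pmf ratios are used), and $r<1$ holds exactly because $t>np$; hence
\[
\mathbb{E}\big[(B-t)^+\big]=\sum_{j>t}\mathbb{P}(B\ge j)\le\mathbb{P}(B\ge t)\sum_{k\ge 1}r^k=\frac{r}{1-r}\,\mathbb{P}(B\ge t)=\frac{p(n-t)}{t-np}\,\mathbb{P}(B\ge t),
\]
which is precisely your conditional mean-excess inequality, with no slack lost in the geometric domination. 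Conceptually, your route reconstructs the ``Bernstein--Hoeffding method'' that underlies the cited result: Hoeffding's convex comparison $S_n\le_{\mathrm{cx}}Bin(n,p)$ with the exponential test function replaced by the ramp $(x-t+1)^+$. What your approach buys is a self-contained proof and a uniform treatment of the whole range $np<t\le n$: at $t=n$ the ramp bound reads $\mathbb{P}(S_n\ge n)\le\mathbb{E}[(B-n+1)^+]=p^n$, so the paper's separate appendix discussion (including its discrete/continuous dichotomy) becomes unnecessary. What the paper's approach buys is only brevity by citation.
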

Just as the authors of \cite{pelekis2015bernsteinhoeffding} point out, for certain values of $p,n,\text{ and } t$ this upper bound is smaller than the one provided by Bentkus in his celebrated  inequality \cite{Bentkus_2004}. We will verify those observations in the valid p-value formulation in the last section of this paper.

A classical idea for many  concentration inequalities is that the results that hold for a lower tail also hold for the upper tail and vice versa. This is the case for this inequality.

\begin{corollary}(Lower tail version)\label{coro}
Let $L_1,\dots, L_n$ be i.i.d. bounded random variables in $[0,1]$, and $R\coloneqq \mathbb{E}(L_i)\in (0,1)$.
Then for any integer $k\in [0,nR)$,
$$\Prob\Big( \sum_{i=1}^{n}L_i\leq k \Big)\leq \frac{R(n-k)}{nR-k}\Prob\Big(Bin(n,R)\leq k \Big).$$
\end{corollary}

\begin{proof}
Let $k\in [0,nR)$. Consider a change of variables, $X_i\coloneqq 1-L_i$, and we take $p\coloneq1-R$, which is the mean of each $X_i$; notice that $p\in (0,1)$ and $0\leq X_i\leq 1$ for all $i \in \{ 1,\dots, n\}$. Also notice that  $k\in [0,nR)$ implies $n-k \in (np,n]$.

\begin{equation*}
    \begin{split}
       \Prob\{ \sum_{i=1}^{n}L_i\leq k \}&=\Prob\{ \sum_{i=1}^{n}(1-X_i)\leq k \}\\
       &=\Prob\{ \sum_{i=1}^{n}X_i\geq n-k\}\\
       &\leq \frac{n-k-(n-k)(1-R)}{n-k-n(1-R)} \Prob\{Bin(n, 1-R)\geq n-k\}\\
       &=\frac{(n-k)R}{nR-k} \Prob\{n-Bin(n, 1-R)\leq k\}\\
       &=\frac{(n-k)R}{nR-k} \Prob\{Bin(n, R)\leq k\}.
    \end{split}
\end{equation*}
\end{proof}

We can reformulate this corollary in the following way:

\begin{equation} 
    \begin{split}
      \Prob\{\frac{1}{n}\sum_{i=1}^{n}L_i\leq t \}  &=\Prob\{\sum_{i=1}^{n}L_i\leq nt\}\\
      &\leq \Prob\{\sum_{i=1}^{n}L_i\leq \lceil nt \rceil\} \\
      &\leq \frac{R(n-\lceil nt \rceil)}{nR-\lceil nt \rceil}\Prob\{Bin(n,R)\leq \lceil nt \rceil \} ,
    \end{split}
    \label{equu}
\end{equation}

for values of $t\in [0,R)$ such that $0\leq \lceil nt \rceil<nR$. We applied Corollary \ref{coro}, and we use the fact that any CDF is non-decreasing. A visual representation of the function $t\mapsto \lceil nt \rceil$ is helpful to understand the upper bound for $t$. We present such representation in Figure 1.

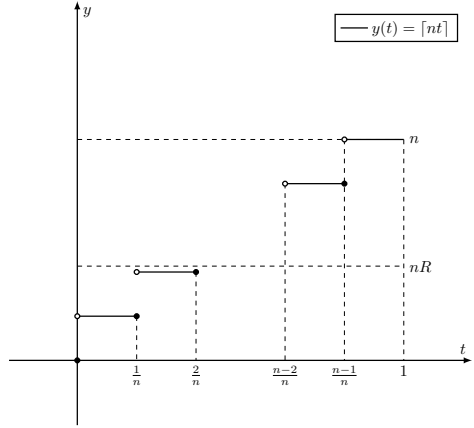
\begin{figure}[H]
\centering
\begin{tikzpicture}[scale=0.60]
\begin{axis}[width=5in,axis equal image,
    smooth,thick,
      axis lines=center,
    xmin=-1,xmax=12,
    xlabel=$t$,ylabel=$y$,
    ymin=-1,ymax=11,
    xtick={\empty},ytick={\empty},
    legend style={cells={anchor=west}},
    legend pos= north east
]

\addlegendentry{$y(t)=\lceil nt\rceil$}

\addplot [
    domain=0:2, 
    samples=100, 
    color=black,
]
{1.5};

\addplot [
    domain=2:4, 
    samples=100, 
    color=black,
]
{3};

\addplot [
    domain=7:9, 
    samples=100, 
    color=black,
    ]
{6}; 

\addplot [
    domain=9:11, 
    samples=100, 
    color=black,
    ]
{7.5}; 

\draw[dashed, line width=0.15mm] (2,1.5) -- (2,0) node[below] {$\frac{1}{n}$};
\draw[dashed, line width=0.15mm] (4,3) -- (4,0) node[below] {$\frac{2}{n}$};

\draw[dashed, line width=0.15mm] (7, 6) -- (7,0) node[below] {$\frac{n-2}{n}$};

\draw[dashed, line width=0.15mm] (9, 7.5) -- (9,0) node[below] {$\frac{n-1}{n}$};

\draw[dashed, line width=0.15mm] (0, 3.2) -- (11,3.2) node[right] {$nR$};

\draw[dashed, line width=0.15mm] (11, 7.5) -- (11,0) node[below] {$1$};

\draw[dashed, line width=0.15mm] (0, 7.5) -- (11,7.5) node[right] {$n$};
\draw [fill=black] (0,0) circle [radius=1.5pt];
\draw [fill=white] (0,1.5) circle [radius=1.5pt];

\draw [fill=black] (2,1.5) circle [radius=1.5pt];
\draw [fill=white] (2,3) circle [radius=1.5pt];
\draw [fill=black] (4,3) circle [radius=1.5pt];
\draw [fill=white] (7,6) circle [radius=1.5pt];
\draw [fill=white] (9,7.5) circle [radius=1.5pt];
\draw [fill=black] (9,6) circle [radius=1.5pt];
\end{axis}
\end{tikzpicture}
\caption{\normalfont \footnotesize A graph of $\lceil nt\rceil$ in $[0,1]$, with some hypothetical critical values to make sure that the upper bound is well defined. This figure motivates the definition of $\gamma(R)$. The function $\lceil nt\rceil$ takes positive jumps at $\{0,\frac{1}{n}, \frac{2}{n},\dots,  \frac{n-1}{n}\}$.}
\end{figure}

Given that $R\in (0,1)$, we have that $0<nR<n$ for any $n \in\mathbb{N}$. Hence, we take $$\gamma(R)\coloneqq \text{min}\{r \in \mathbb{N}: r\geq nR \}.$$ 

By the  Archimedian Property, the involved set is non-empty. Furthermore, given that the set is not empty, the Well Order Principle  guarantees that the minimum always exists. Note  the fact that $n$ is always an element in $\{r \in \mathbb{N}: r\geq nR \}$.

Moreover, we consider values of $t\in [0,R)$ such that $t\leq\frac{\gamma(R)-1}{n}$.\footnote{Initially we considered taking $t\leq \gamma(R)/n$ if $nR\notin \{0,1,\dots, n-1\}$. But since this could require to study separately the case when $nR\in \{0,1,\dots, n-1\}$  we decided to stick with the more conservative definition that considers both cases simultaneously.} 
Note that by definition, $\frac{\gamma(R)-1}{n}<R$. This election of $\gamma(R)$ guarantees that the upper bound for \ref{equu} is well defined. Equivalently, what the definition of $\gamma(R)$ helps us ensure is that 
$$\Big[0,\frac{\gamma(R)-1}{n}\Big]\subset \{t\in [0,R): 0\leq \lceil nt\rceil<nR \},$$
because by definition, $\gamma(R)-1\notin\{r \in \mathbb{N}: r\geq nR \}$, so $$t\in \Big[0,\frac{\gamma(R)-1}{n}\Big] \implies nt\leq \gamma(R)-1<nR \implies \lceil nt\rceil \leq \gamma(R)-1<nR .$$

\subsection{Outline}

\begin{itemize}
    \item We will show that if we define $g(t;R)\coloneqq \frac{R(n-\lceil nt \rceil)}{nR-\lceil nt \rceil}\Prob\{Bin(n,R)\leq \lceil nt \rceil \}$ for all $t\in [0, \frac{\gamma(R)-1}{n})$ and $g(\frac{\gamma(R)-1}{n};R)\coloneqq \text{max}\{1, \frac{R(n-\gamma(R)+1)}{nR-\gamma(R)+1}\Prob\{Bin(n,R)\leq \gamma(R)-1\}\} $, then we obtain that 

$$g\Big(\text{min}\{\hat{R}, \frac{\gamma(\alpha)-1}{n}\};\alpha\Big)$$ is a super-uniform p-value to test $H_0:R>\alpha$. We will refer to it as PRW valid p-value. We will also provide some plots comparing this valid p-value to Bentkus's valid p-value and Hoeffding's tight super-uniform p-value, showing that the PRW valid p-value is tighter than both of them in some regions.  As an auxiliary step towards this goal we need intermediate considerations.

\item We will show that $g$ has the following properties. Firstly, $g(\cdot;R)$ is a non-decreasing function in its first argument for any fixed value of $R$, for values of $t\in [0,\frac{\gamma(R)-1}{n}]$. For any $R_1,R_2\in (0,1)$  and any  $t\in [0,\frac{\gamma(R_1)-1}{n})$, such that $R_1\leq R_2$, $g(t;R_2)\leq g(t;R_1)$. Moreover,  $t\in  [0,\frac{\gamma(R_1)-1}{n})$ implies $t\in [0,\frac{\gamma(R_2)-1}{n})$ so that we are always in the region where we've defined $g$. Finally, for any $\delta \in (0,1)$  such that $\delta>(1-R)^n$, we can define 
$$g^{-1}(\delta;R)\coloneqq \text{sup}\Big\{t\in \big[0,\frac{\gamma(R)-1}{n}\big]\quad|\quad g(t;R)\leq \delta\Big\}.$$
 And by the fact that $t\mapsto\lceil nt\rceil$ is left continuous and non-decreasing, such supremum is always a maximum and is well defined. Given that $g(0;R)=(1-R)^n$, the underlying set is always non-empty for values of $\delta \in  (0,1)$ such that $(1-R)^n <\delta$.

\end{itemize}

\begin{lemma} $g(\cdot; R):[0,\frac{\gamma(R)-1}{n}]\longrightarrow \mathbb{R}$ is non-decreasing for any fixed $R\in (0,1)$.
\end{lemma}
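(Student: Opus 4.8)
The plan is to exploit the fact that $g(\cdot;R)$ is, away from its right endpoint, the composition of the non-decreasing step function $t\mapsto\lceil nt\rceil$ with a function of an integer argument. Concretely, I would define for integers $k$ the quantity $h(k;R)\coloneqq \frac{R(n-k)}{nR-k}\Prob\{Bin(n,R)\leq k\}$, so that $g(t;R)=h(\lceil nt\rceil;R)$ on $[0,\frac{\gamma(R)-1}{n})$. As $t$ runs over this interval, $\lceil nt\rceil$ takes precisely the integer values $0,1,\dots,\gamma(R)-1$ and never decreases. Hence, if I can show that $k\mapsto h(k;R)$ is non-decreasing over $k\in\{0,1,\dots,\gamma(R)-1\}$, then $g(\cdot;R)$ is non-decreasing on the half-open interval. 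The closed right endpoint is then handled separately: since $\lceil n\cdot\frac{\gamma(R)-1}{n}\rceil=\gamma(R)-1$ and $g(\frac{\gamma(R)-1}{n};R)=\max\{1,h(\gamma(R)-1;R)\}\geq h(\gamma(R)-1;R)$, the endpoint value dominates every preceding value, so monotonicity is preserved across the whole domain regardless of whether the maximum is attained at $1$ or at $h(\gamma(R)-1;R)$.

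The core of the argument is therefore the integer monotonicity of $h$. Here the key idea is to factor $h(k;R)=c(k)\,F(k)$, where $c(k)\coloneqq \frac{R(n-k)}{nR-k}$ and $F(k)\coloneqq \Prob\{Bin(n,R)\leq k\}$. The factor $F$ is a binomial CDF and is thus non-negative and non-decreasing in $k$ by construction. It then suffices to show that $c$ is non-negative and non-decreasing over the same range, because the product of two non-negative non-decreasing sequences is non-decreasing; this follows from writing $c(k+1)F(k+1)-c(k)F(k)=c(k+1)\,(F(k+1)-F(k))+F(k)\,(c(k+1)-c(k))$, which exhibits the increment as a sum of two non-negative terms.

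For the factor $c$, I would treat $\frac{R(n-s)}{nR-s}$ as a smooth function of a real variable $s$ and differentiate; the derivative equals $\frac{R\,n(1-R)}{(nR-s)^2}$, which is strictly positive whenever $nR-s\neq 0$, using $R\in(0,1)$ and $n\in\mathbb{N}$. The place where the choice of $\gamma(R)$ does the real work is exactly in keeping us away from the singularity: for every relevant $k$ we have $k\leq\gamma(R)-1<nR$, so $nR-s>0$ for all $s$ in each comparison interval $[k,k+1]$ with $k+1\leq\gamma(R)-1$, and hence both $c(k)>0$ and $c(k+1)>c(k)$. This confirms that $c$ is positive and increasing on $\{0,1,\dots,\gamma(R)-1\}$, and together with the monotonicity of $F$ completes the proof.

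I expect the only genuine obstacle to be bookkeeping rather than analysis: one must be careful to verify that the domain $[0,\frac{\gamma(R)-1}{n}]$ really forces $\lceil nt\rceil<nR$ throughout, so that the denominator $nR-k$ never vanishes nor changes sign and the sign of the derivative of $c$ is unambiguous. This is precisely the property of $\gamma(R)$ established just before the Outline, so the step should go through cleanly once that reduction is invoked; the remaining computations (the derivative of $c$ and the product inequality) are routine.
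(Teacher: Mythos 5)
Your proposal is correct and follows essentially the same route as the paper: reduce to comparisons of consecutive values of $\lceil nt\rceil$, factor $g$ into the coefficient $\frac{R(n-k)}{nR-k}$ times the binomial CDF, show each factor is non-negative and non-decreasing on $\{0,1,\dots,\gamma(R)-1\}$ (using $\gamma(R)-1<nR$ to keep the denominator positive), and handle the right endpoint via the $\max\{1,\cdot\}$ in the definition of $g\big(\frac{\gamma(R)-1}{n};R\big)$. The only minor difference is that you prove monotonicity of the coefficient by differentiating its real-variable extension, whereas the paper cross-multiplies and reduces the inequality to $R<1$; your explicit identity for the increment of the product also makes the positivity needed in the paper's ``multiplying both quantities'' step more transparent.
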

\begin{proof}
Let $R\in (0,1)$.

    Let $t_1,t_2\in [0,\frac{\gamma(R)-1}{n}]$ be such that $t_1<t_2$ and $k_j\coloneqq\lceil nt_j\rceil$ for $j=1,2$ such that $k_2=k_1+1$. 

In our first case, we consider $t_2<\frac{\gamma(R)-1}{n}$. Then

    \begin{equation*}
        \begin{split}
            g(t_1;R)<g(t_2;R)&\iff\frac{R(n-\lceil nt_1 \rceil)}{nR-\lceil nt_1 \rceil}\Prob\{Bin(n,R)\leq \lceil nt_1 \rceil \}<\frac{R(n-\lceil nt_2 \rceil)}{nR-\lceil nt_2 \rceil}\Prob\{Bin(n,R)\leq \lceil nt_2 \rceil \}\\
            &\iff \frac{R(n-k_1)}{nR-k_1}\Prob\{Bin(n,R)\leq k_1 \}<\frac{R(n-k_1-1)}{nR-k_1-1}\Prob\{Bin(n,R)\leq k_1+1 \}.\\
        \end{split}
    \end{equation*}
Separately we consider $\Prob\{Bin(n,R)\leq k_1 \}<\Prob\{Bin(n,R)\leq k_1+1 \}$, which is true by the properties of a CDF of any binomial random variable. Whereas

\begin{equation*}
    \begin{split}
      \frac{R(n-k_1)}{nR-k_1}&<\frac{R(n-k_1-1)}{nR-k_1-1}\iff
    R(n-k_1)(nR-k_1-1)<R(n-k_1-1)(nR-k_1)\\
    &\iff (n-k_1)(nR-k_1)-(n-k_1)<(n-k_1)(nR-k_1)-(nR-k_1)\\
    &\iff k_1-n<k_1-nR\\
    &\iff nR<n\\
    &\iff R<1.
    \end{split}
\end{equation*}
We conclude the desired result multiplying both quantities. Moreover this proves that the smallest value that $g(t;R)$ can attain for values of $t\in [0,\frac{\gamma(R)-1}{n}]$ is $g(0;R)=(1-R)^n$.
And if $t_2=\frac{\gamma(R)-1}{n}$, then the same reasoning proves that 
$$g(t_1;R)<\frac{R(n-\lceil nt_2 \rceil)}{nR-\lceil nt_2 \rceil}\Prob\{Bin(n,R)\leq \lceil nt_2 \rceil \}.$$
And by definition of $g(\frac{\gamma(R)-1}{n};R)$, $\frac{R(n-\lceil nt_2 \rceil)}{nR-\lceil nt_2 \rceil}\Prob\{Bin(n,R)\leq \lceil nt_2 \rceil \}\leq g(\frac{\gamma(R)-1}{n};R)$, so we can conclude that $g(t_1;R)<g(t_2;R)$ when $0\leq t_1<t_2=\frac{\gamma(R)-1}{n}$.
\end{proof}

\begin{lemma} $g(t; \cdot):(0,1)\longrightarrow \mathbb{R}$ is non-increasing in  $R\in (0,1)$ for any fixed value of $t\in [0,\frac{\gamma(R)-1}{n})$.
    
\end{lemma}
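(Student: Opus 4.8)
The plan is to fix $R_1\leq R_2$ in $(0,1)$ together with a value $t\in[0,\frac{\gamma(R_1)-1}{n})$, and to show $g(t;R_2)\leq g(t;R_1)$. First I would record that $\gamma$ is itself non-decreasing in $R$: as $R$ grows the set $\{r\in\mathbb{N}:r\geq nR\}$ can only shrink, so its minimum cannot decrease, giving $\frac{\gamma(R_1)-1}{n}\leq\frac{\gamma(R_2)-1}{n}$. Hence $t$ lies in $[0,\frac{\gamma(R_2)-1}{n})$ as well, and for both $R_1$ and $R_2$ the value $k\coloneqq\lceil nt\rceil$ (which depends only on $t$, not on $R$) satisfies $k<nR_i$, so that $g(t;R_1)$ and $g(t;R_2)$ are both given by the same plain formula $g(t;R)=\frac{R(n-k)}{nR-k}\Prob\{Bin(n,R)\leq k\}$, with no appeal to the $\max$ at the right endpoint. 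This reduces the lemma to showing that, with $k$ a fixed integer in $\{0,1,\dots,n-1\}$, the map $\phi(R)\coloneqq\frac{R(n-k)}{nR-k}\Prob\{Bin(n,R)\leq k\}$ is non-increasing on $(k/n,1)$.

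Next I would factor $\phi=h\cdot F$, where $h(R)\coloneqq\frac{R(n-k)}{nR-k}$ and $F(R)\coloneqq\Prob\{Bin(n,R)\leq k\}$, and treat the two factors separately, in the elementary iff-style of the preceding lemma. For $h$, canceling the positive factor $n-k$ and cross-multiplying the positive denominators gives $h(R_1)\geq h(R_2)\iff R_1(nR_2-k)\geq R_2(nR_1-k)\iff k(R_2-R_1)\geq 0$, which holds because $k\geq 0$; so $h$ is non-increasing (and constant when $k=0$). For $F$, the cleanest route is stochastic monotonicity of the binomial: with $U_1,\dots,U_n$ i.i.d. uniform on $[0,1]$, the variable $\sum_{i=1}^{n}\mathbf{1}\{U_i\leq R\}$ has law $Bin(n,R)$ and is pathwise non-decreasing in $R$, so $\{Bin(n,R_2)\leq k\}\subseteq\{Bin(n,R_1)\leq k\}$ under this coupling and therefore $F(R_2)\leq F(R_1)$. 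Alternatively one can differentiate via the incomplete-beta identity $\frac{d}{dR}\Prob\{Bin(n,R)\leq k\}=-n\binom{n-1}{k}R^{k}(1-R)^{n-1-k}\leq 0$.

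Finally I would combine the two factors. The one point that genuinely needs care is that a product of two non-increasing functions need not be non-increasing; this step works only because both factors are non-negative. Since $h(R)>0$ and $F(R)\geq(1-R)^{n}>0$ throughout $(k/n,1)$, the standard chain $h(R_1)F(R_1)\geq h(R_2)F(R_1)\geq h(R_2)F(R_2)$ applies and yields $\phi(R_1)\geq\phi(R_2)$, that is $g(t;R_2)\leq g(t;R_1)$, as desired.

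I would expect the main obstacle to be the monotonicity of the binomial tail $F$: it is intuitively clear but requires a genuine argument (the coupling, or the beta-integral derivative) rather than bare algebra, and it is paired with the easy-to-overlook requirement that both factors be non-negative before they are multiplied. The factor $h$, the region-containment step, and the final combination are all routine once these two points are handled.
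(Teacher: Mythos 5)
Your proposal is correct and takes essentially the same route as the paper: the containment $[0,\frac{\gamma(R_1)-1}{n})\subseteq[0,\frac{\gamma(R_2)-1}{n})$ via monotonicity of $\gamma$, then the factorization into the rational term $\frac{R(n-k)}{nR-k}$ and the binomial CDF term, with identical cross-multiplication algebra reducing the former to $k(R_2-R_1)\geq 0$. The only difference is one of rigor, in your favor: where the paper dismisses the monotonicity of $R\mapsto\Prob\{Bin(n,R)\leq k\}$ with ``clearly'' and silently multiplies the two inequalities, you supply a genuine argument for the CDF step (the coupling with uniforms, or the incomplete-beta derivative) and note explicitly that multiplying the two bounds is legitimate only because both factors are non-negative.
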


\begin{proof}
    Let $0<R_1<R_2<1$. Let $t\in [0,\frac{\gamma(R_1)-1}{n})$. We notice that $$0<R_1<R_2<1\implies \gamma(R_1)\leq\gamma(R_2),$$ by definition of 
    $$\gamma(R)\coloneqq \text{min}\{ r\in \mathbb{N}: r\geq nR\}\text{ for all } R\in (0,1).$$
We have that $\{ r\in \mathbb{N}: r\geq nR_2\}\subset\{ r\in \mathbb{N}: r\geq nR_1\}$. So $t\in [0,\frac{\gamma(R_1)-1}{n})\subseteq [0,\frac{\gamma(R_2)-1}{n})$.

Let $k\coloneqq \lceil nt\rceil$.  Then 

\begin{equation*}
  g(t; R_2)\leq g(t; R_1)\iff \frac{R_2(n-k)}{nR_2-k}\Prob\{Bin(n,R_2)\leq k \}\leq\frac{R_1(n-k)}{nR_1-k}\Prob\{Bin(n,R_1)\leq k \}.
\end{equation*}
Clearly, $\Prob\{Bin(n,R_2)\leq k \}\leq\Prob\{Bin(n,R_1)\leq k \}$. On the other hand, 
$\frac{R_2(n-k)}{nR_2-k}\leq\frac{R_1(n-k)}{nR_1-k}$ if and only if $(nR_1-k)R_2\leq R_1(nR_2-k)$. In turn, this occurs if and only if $kR_1\leq kR_2$, which is true since $k$ is a non-negative integer and $R_1<R_2$.
\end{proof}

With this context in mind, we introduce the following theorem, which is the main result of this work.

 \begin{theorem}
 Let $L_1,\dots, L_n$ be a random sample of random variables bounded in $[0,1]$ whose mean is $R\in (0,1)$. Let $\alpha \in (0,1)$.  Consider the hypothesis testing problem $H_0:R>\alpha \text{ vs. }H_1:R\leq \alpha$. Take $\Hat{R}\coloneqq\sum\limits_{i=1}^{n}L_i $. Then $g\Big(\text{min}\{\hat{R}, \frac{\gamma(\alpha)-1}{n}\};\alpha\Big)$ is a valid p-value to test $H_0:R>\alpha$.
 \end{theorem}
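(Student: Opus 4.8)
The plan is to verify the defining property of a valid (super-uniform) p-value directly: for every null distribution, i.e. every $R>\alpha$, and every $u\in[0,1]$, I must show
$\Prob_R\big(g(\min\{\hat R,\tfrac{\gamma(\alpha)-1}{n}\};\alpha)\le u\big)\le u$, where $\hat R=\tfrac1n\sum_i L_i$ is the sample mean (the displayed $\hat R\coloneqq\sum_i L_i$ in the statement being a typo for the empirical risk used throughout). Write $c\coloneqq\frac{\gamma(\alpha)-1}{n}$, $T\coloneqq\min\{\hat R,c\}$, and $p\coloneqq g(T;\alpha)$. Since $T\in[0,c]$ and, by the first lemma, $g(\cdot;\alpha)$ is non-decreasing on $[0,c]$ with minimum $g(0;\alpha)=(1-\alpha)^n$, I first dispose of two boundary cases: if $u\ge1$ the claim is trivial, and if $u<(1-\alpha)^n$ then $p\ge(1-\alpha)^n>u$ forces $\Prob_R(p\le u)=0\le u$.

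For the remaining range $(1-\alpha)^n\le u<1$ the generalized inverse $g^{-1}(u;\alpha)=\max\{t\in[0,c]:g(t;\alpha)\le u\}$ is well-defined (the supremum is attained because $t\mapsto\lceil nt\rceil$ is left-continuous and non-decreasing, as noted in the Outline). Monotonicity of $g(\cdot;\alpha)$ then yields the exact event identity $\{p\le u\}=\{T\le s\}$ with $s\coloneqq g^{-1}(u;\alpha)$. The next step is to remove the truncation by $c$. Because $g(c;\alpha)=\max\{1,\dots\}\ge1$, one has $s=c$ only if $u\ge1$; hence for $u<1$ we get $s<c$, so that $\min\{\hat R,c\}\le s$ holds if and only if $\hat R\le s$, giving $\{p\le u\}=\{\hat R\le s\}$.

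It remains to bound $\Prob_R(\hat R\le s)$. Here I invoke the lower-tail inequality in its reformulated Corollary form, eq. (\ref{equu}), with parameter equal to the true mean $R$. The domain condition $0\le\lceil ns\rceil<nR$ must be checked: since $R>\alpha$ and $s\le c=\frac{\gamma(\alpha)-1}{n}$, the definition of $\gamma$ gives $\lceil ns\rceil\le\gamma(\alpha)-1<n\alpha<nR$, so the hypothesis of \ref{coro} holds and $\Prob_R(\hat R\le s)\le g(s;R)$. Finally, since $s<c<\alpha<R$, the second lemma ($g(s;\cdot)$ non-increasing in $R$) gives $g(s;R)\le g(s;\alpha)$, and by definition of the inverse $g(s;\alpha)\le u$. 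Chaining these, $\Prob_R(p\le u)=\Prob_R(\hat R\le s)\le g(s;R)\le g(s;\alpha)\le u$, which is exactly super-uniformity.

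I expect the main obstacle to be the bookkeeping in the middle paragraph: correctly turning $\{g(T;\alpha)\le u\}$ into $\{\hat R\le s\}$. The two delicate points are (a) the outer minimum with $c$, which could spoil the equivalence unless one rules out $s=c$ (done via $g(c;\alpha)\ge1$), and (b) justifying $g(T;\alpha)\le u\iff T\le s$, which needs both the monotonicity of the first lemma and the fact that the defining supremum of $g^{-1}$ is attained. Once the event is reduced to $\{\hat R\le s\}$, the concentration step and the two monotonicity lemmas combine mechanically; the only remaining care is verifying the domain hypothesis $\lceil ns\rceil<nR$, which is precisely what the construction of $\gamma(\alpha)$ and the strict null $R>\alpha$ were designed to guarantee.
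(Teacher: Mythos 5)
Your proof is correct, and it uses exactly the same ingredients as the paper's proof --- the monotonicity of $g$ in each argument (the two lemmas), the fact that the supremum defining $g^{-1}$ is attained, the concentration bound in the form of \eqref{equu}, and the observation $g(\frac{\gamma(\alpha)-1}{n};\alpha)\geq 1$ to dispose of the truncation --- but it assembles them in a genuinely different order, and the difference is worth recording. The paper works inside the probability: it first uses monotonicity in the second argument to pass from the event $\{g(\hat R;\alpha)\le\delta\}$ to the larger event $\{g(\hat R;R)\le\delta\}$, then inverts $g(\cdot;R)$ \emph{at the unknown parameter} $R$, and only then applies the concentration inequality at the (unknown) point $g^{-1}(\delta;R)$; the truncation is handled by a law-of-total-probability split on $\{\hat R\ge\frac{\gamma(\alpha)-1}{n}\}$. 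You instead invert $g(\cdot;\alpha)$ \emph{at the known level} $\alpha$, obtaining the exact deterministic rejection-region identity $\{p\le u\}=\{\hat R\le s\}$ with $s=g^{-1}(u;\alpha)$, and only afterwards apply the concentration bound at $R$ followed by the second lemma, $g(s;R)\le g(s;\alpha)\le u$. Your ordering buys an explicit, computable rejection threshold --- all dependence on the unknown $R$ is confined to the final two inequalities --- and replaces the paper's event-space split by the cleaner remark that $s<\frac{\gamma(\alpha)-1}{n}$ whenever $u<1$; the paper's ordering, in exchange, only ever needs the inverse at the parameter ($R$) for which its case split $\delta\ge(1-R)^n$ versus $\delta<(1-R)^n$ matches the domain on which $g^{-1}(\cdot;R)$ was constructed.

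One caveat, which your write-up shares with the paper: if $\gamma(\alpha)=1$ (i.e.\ $n\alpha\le 1$), then $\frac{\gamma(\alpha)-1}{n}=0$ and, by the definition of $g$ at the right endpoint, $g(0;\alpha)=\text{max}\{1,(1-\alpha)^n\}=1$, so the set defining $g^{-1}(u;\alpha)$ is empty for $u<1$ and your threshold $s$ does not exist. In that situation, however, $p=g(0;\alpha)=1$ almost surely and super-uniformity is trivial, so a one-line sub-case closes the hole. (The paper's proof has the analogous blind spot for $g^{-1}(\delta;R)$ when $nR\le 1$, since the claim $g(0;R)=(1-R)^n$ implicitly presupposes $\gamma(R)\ge 2$.)
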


 \begin{proof}
     Let $\delta\in (0,1)$.

    Case 1: $\delta\geq(1-R)^n$.

    Then

\begin{equation*}
    \begin{split}
        \Prob_{H_0}\Big(g\Big(\text{min}\{\hat{R}, \frac{\gamma(\alpha)-1}{n}\};\alpha\Big)\leq \delta\Big)&=\Prob_{H_0}\Big(g\Big(\text{min}\{\hat{R}, \frac{\gamma(\alpha)-1}{n}\};\alpha\Big)\leq \delta, \hat{R}\geq \frac{\gamma(\alpha)-1}{n}\Big)\\
        &+\Prob_{H_0}\Big(g\Big(\text{min}\{\hat{R}, \frac{\gamma(\alpha)-1}{n}\};\alpha\Big)\leq \delta, \hat{R}< \frac{\gamma(\alpha)-1}{n}\Big),  
    \end{split}
\end{equation*}

where we applied the law of total probability.
The event inside the probability of the first term is equal to the empty set due to the fact that by definition $g(\frac{\gamma(\alpha)-1}{n};\alpha)\geq 1$.
So we only need to consider the second term. 

\begin{equation*}
    \begin{split}
        \Prob_{H_0}\Big(g\Big(\text{min}\{\hat{R}, \frac{\gamma(\alpha)-1}{n}\};\alpha\Big)\leq \delta\Big)&=\Prob_{H_0}\{g\big(\hat{R};\alpha\big)\leq \delta, \hat{R}< \frac{\gamma(\alpha)-1}{n}\}\\
        &\leq\Prob_{H_0}\{g\big(\hat{R};R\big)\leq \delta, \hat{R}< \frac{\gamma(\alpha)-1}{n}\},
    \end{split}
\end{equation*}

where we used that $g$ is non-increasing in its second argument for fixed values of its first argument.\footnote{Also, notice that $\hat{R}< \frac{\gamma(\alpha)-1}{n}$ and $H_0:R>\alpha$ implies that $\hat{R}< \frac{\gamma(R)-1}{n}$ so the last probability evaluates $g$ where it is well defined. In particular, $H_0\implies \frac{\gamma(R)-1}{n}\geq \frac{\gamma(\alpha)-1}{n} $.} Next we apply the definition of $g^{-1}$

\begin{equation*}
    \begin{split}
    \Prob_{H_0}\{g\big(\hat{R};R\big)\leq \delta, \hat{R}< \frac{\gamma(\alpha)-1}{n}\}&=\Prob_{H_0}\{\hat{R}\leq g^{-1}(\delta;R), \hat{R}< \frac{\gamma(\alpha)-1}{n}\}\\
    &\leq \Prob_{H_0}\{\hat{R}\leq g^{-1}(\delta;R)\}\\
    &\leq g(g^{-1}(\delta;R);R)\leq\delta.
    \end{split}
\end{equation*}

    Case 2: $\delta<(1-R)^n$.
 Then  it is straightforward that $$\Prob_{H_0}\Big(g\Big(\text{min}\{\hat{R}, \frac{\gamma(\alpha)-1}{n}\};\alpha\Big)\leq \delta\Big)=0\leq \delta,$$
because the smallest value that $g(t;\alpha)$ can take for values of $t$ such that  $t\in [0, \frac{\gamma(\alpha)-1}{n}]$ is $(1-\alpha)^n$ and under $H_0$, $(1-R)^n\leq(1-\alpha)^n$. Thus under $H_0$, $\delta<(1-R)^n$ implies $\delta<(1-\alpha)^n$.
 \end{proof}

\section{A comment on some FWER controlling algorithms}

The proposed valid p-value is particularly relevant for regions where the observed empirical risk is much smaller than the considered $\alpha$ in the formulated hypothesis test in \ref{hyp} as the plots in the Appendix show. This is useful information for the purposes of FWER controlling algorithms that incorporate prior knowledge of the experiments to be carried out, for example in the fixed sequence algorithm and in the fallback procedure \cite{dmitrienko2010multiple}. Note that our plots suggest that the PRW super-uniform p-value is not very powerful for values of the empirical risk that are smaller than $\alpha$ but close to it, when comparing it to Bentkus and Hoeffding's valid p-values. However, we can leverage the fact that the PRW valid p-value is very powerful for small observations of the empirical risk. We can leverage this fact when implementing a fixed sequence procedure or a fallback procedure. In the context of those algorithms, given the provided order for the null hypotheses,  it suggests that the observed valid p-values using the PRW valid p-value will be smaller than Bentkus's and Hoeffding's valid p-values for many of the first nulls, hence increasing the power of the algorithms.

\printbibliography
\newpage
\section*{Appendix}
We study some properties of $g^{-1}(\delta;R)\coloneqq \text{sup}\Big\{t\in \big[0,\frac{\gamma(R)-1}{n}\big]\quad|\quad g(t;R)\leq \delta\Big\}$, for values of $\delta \in [(1-R)^n,1)$. By definition, $(1-R)^n=g(0;R)$, thus $0\in \Big\{t\in \big[0,\frac{\gamma(R)-1}{n}\big]\quad|\quad g(t;R)\leq \delta\Big\}$. Moreover, since $\Big\{t\in \big[0,\frac{\gamma(R)-1}{n}\big]\quad|\quad g(t;R)\leq \delta\Big\}\subset \big[0,\frac{\gamma(R)-1}{n}\big]$, then the set of interest is bounded above and by he Axiom of Completeness, $g^{-1}(\delta;R)$ is defined. Furthermore,  $g^{-1}(\delta;R)\leq \text{sup}\big[0,\frac{\gamma(R)-1}{n}\big]=\frac{\gamma(R)-1}{n}$. And by definition $g(\frac{\gamma(R)-1}{n};R)\geq 1$, thus $\frac{\gamma(R)-1}{n}\notin \Big\{t\in \big[0,\frac{\gamma(R)-1}{n}\big]\quad|\quad g(t;R)\leq \delta\Big\}$. We want to show that $g^{-1}(\delta;R)\in \Big\{t\in \big[0,\frac{\gamma(R)-1}{n}\big]\quad|\quad g(t;R)\leq \delta\Big\}$. That is, $g^{-1}(\delta;R)$ is a maximum. By contradiction suppose that $g(g^{-1}(\delta;R);R)>\delta$. But given that for any fixed $R\in (0,1)$, $g(t;R)$ is continuous from the left and has a step function form, $\exists \epsilon>0$ sufficiently small such that $g(g^{-1}(\delta;R)-\epsilon;R)=g(g^{-1}(\delta;R);R)>\delta$. And given that $g(t;R)$ is non-decreasing, then $t\leq g^{-1}(\delta;R)-\epsilon$ for all $t\in \Big\{x\in \big[0,\frac{\gamma(R)-1}{n}\big]: g(x;R)\leq \delta\Big\}$. But this contradicts that $g^{-1}(\delta;R)$ is a supremum. We conclude that $g(g^{-1}(\delta;R);R)\leq \delta$.

\subsection*{Theorem 2.1 holds for $t=n$}
If the involved random variables are continuous, it is then immediate that $$ \mathbb{P}\Big(\sum_{i=1}^n X_i\geq t\Big)\leq \frac{t-tp}{t-np}\Prob\{Bin(n,p)\geq t\},$$
because this inequality for $t=n$ simplifies to
$$ \mathbb{P}\Big(\sum_{i=1}^n X_i\geq n\Big)=\mathbb{P}\Big(\bigcap_{i=1}^{n} \{X_i= 1\}\Big)=0\leq \Prob\{Bin(n,p)\geq n\}.$$
Whereas in the discrete case, consider $Y_i\overset{i.i.d.}{\sim} Bernoulli(p)$, for $i\in \{1,\dots, n\}$.

\begin{equation*}
    \begin{split}
     \mathbb{P}\Big(\sum_{i=1}^n X_i\geq n\Big)  &=\mathbb{P}\Big(\bigcap_{i=1}^{n} \{X_i= 1\}\Big)\\
     &=\Prob(X_1=1)^n .
    \end{split}
\end{equation*}
On the other hand,  the  desired upper bound is given by

\begin{equation*}
    \begin{split}
    \frac{n-np}{n-np}\Prob\{Bin(n,p)\geq n\}&=\Prob\Big(\bigcap_{i=1}^{n} \{Y_i= 1\}\Big)\\
    &=p^n.
    \end{split}
\end{equation*}

Hence the inequality holds in the discrete case if and only if $\Prob(X_1=1)\leq p$. By definition of $p\coloneq\mathbb{E}\{X_1\}$ the result follows:

\begin{equation*}
    \begin{split}
    \Prob(X_1=1)\leq \sum_{x\in S_{X_1}}x\Prob(X_1=x)=\sum_{x\in S_{X_1}-\{1\}}x\Prob(X_1=x)+1\cdot\Prob(X_1=1),
    \end{split}
\end{equation*}

where $S_{X_1}\subseteq [0,1]$ denotes the support of the $X_i$'s.

\begin{figure}[p]
  \centering
  \includegraphics[width=.48\textwidth]{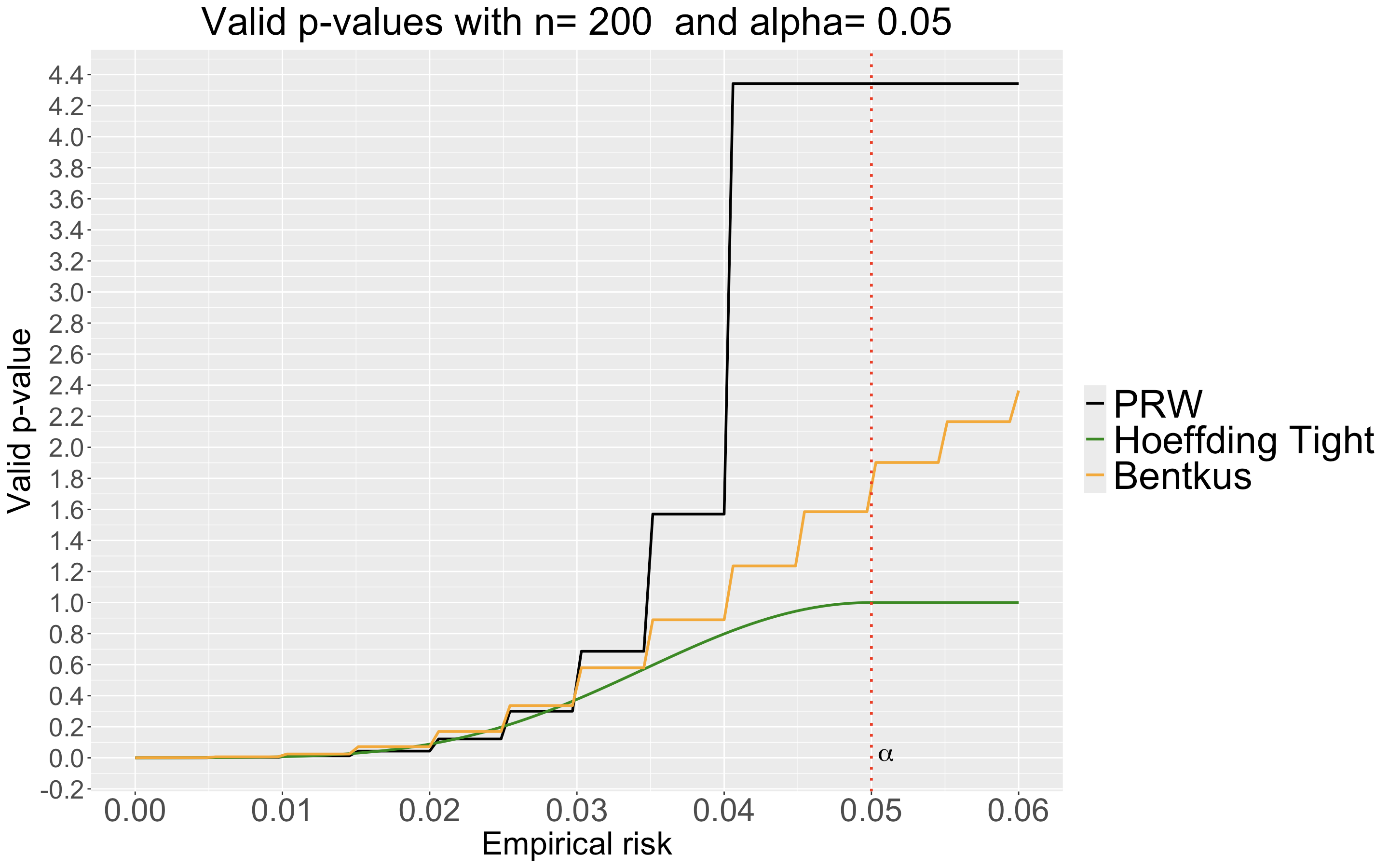}
  \hspace{.2cm}
  \includegraphics[width=.48\textwidth]{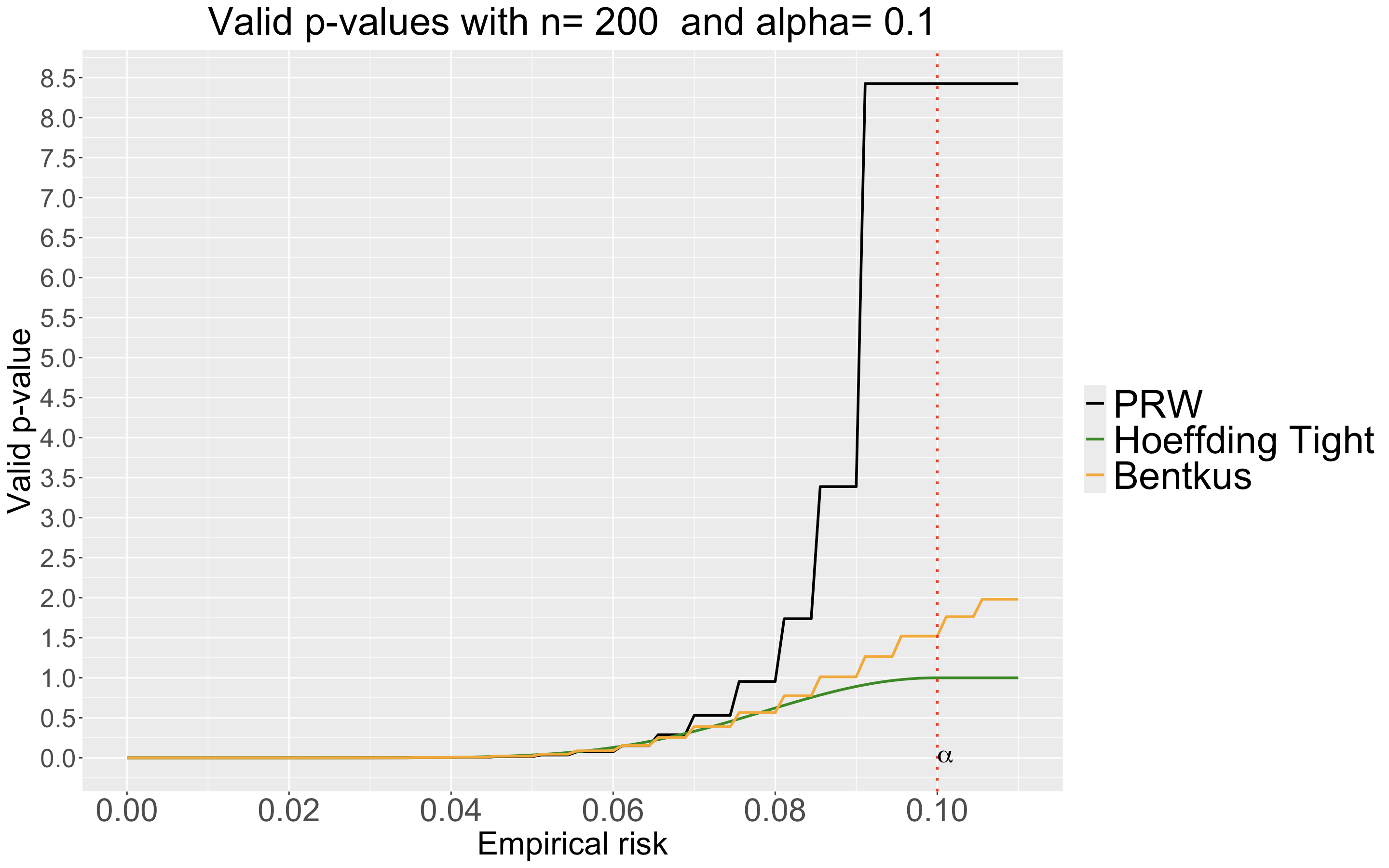}

  \vspace{1cm}
  
  \includegraphics[width=.48\textwidth]{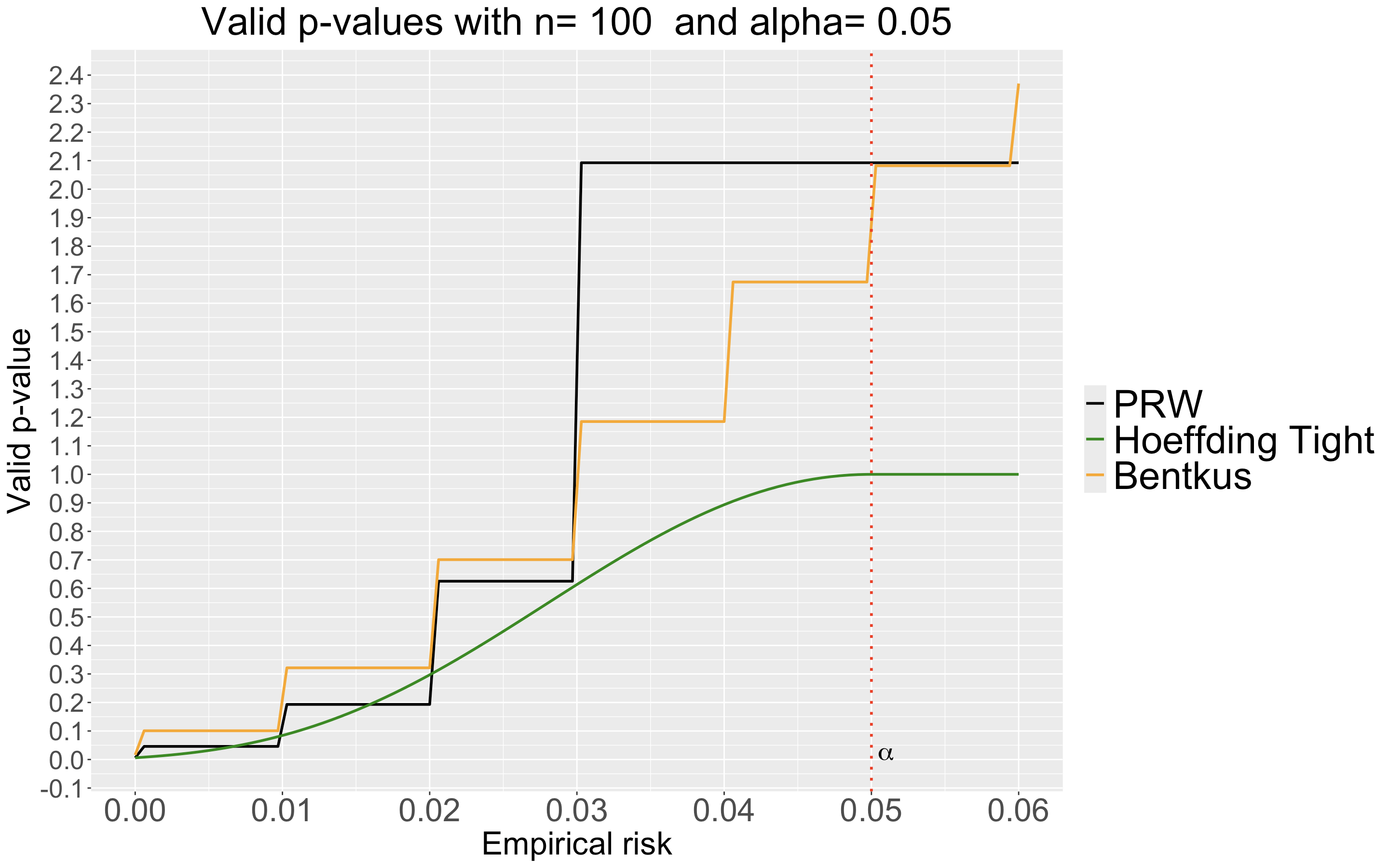}
  \hspace{.2cm}
  \includegraphics[width=.48\textwidth]{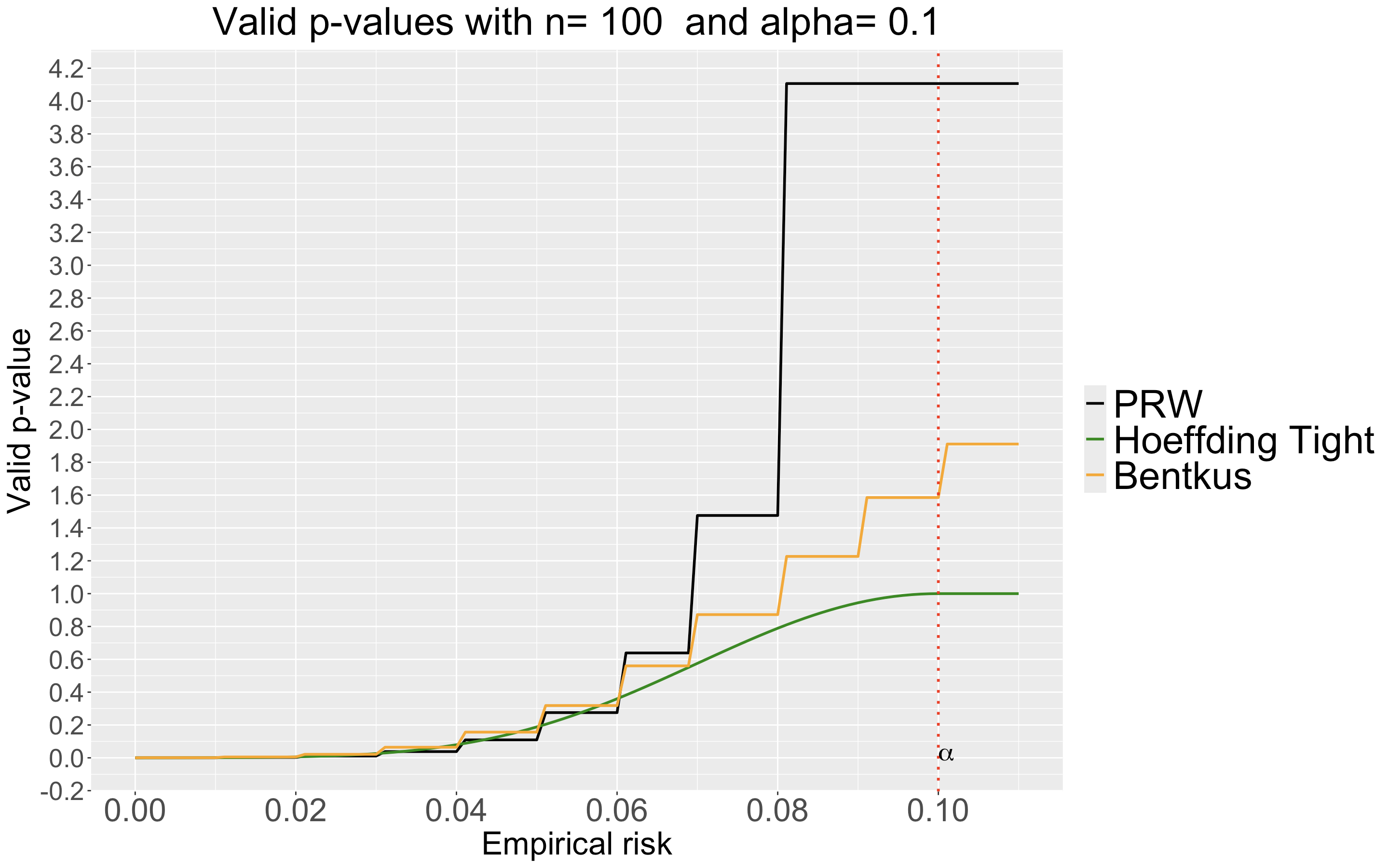}

  \vspace{1cm}

  \includegraphics[width=.48\textwidth]{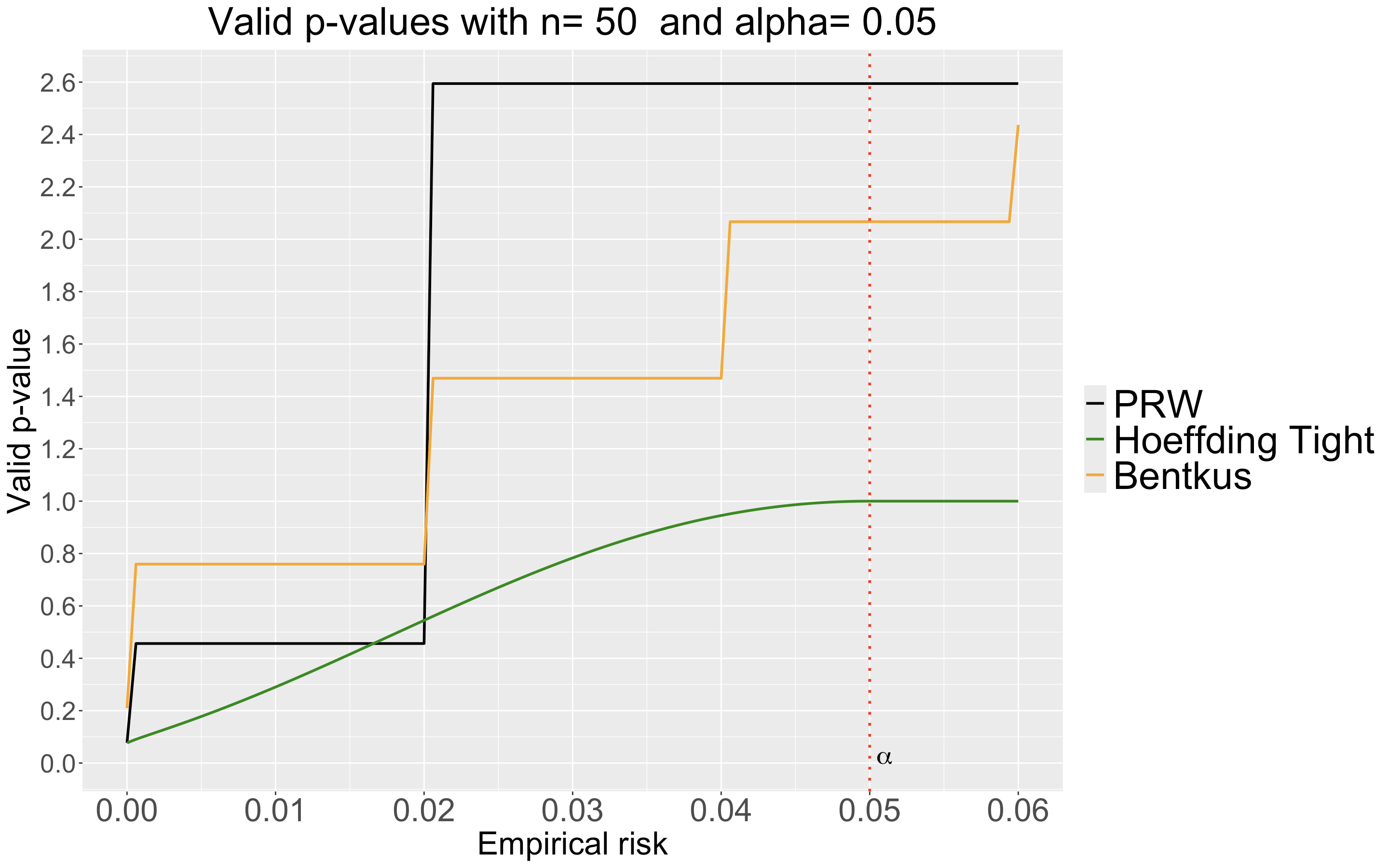}
  \hspace{.2cm}
  \includegraphics[width=.48\textwidth]{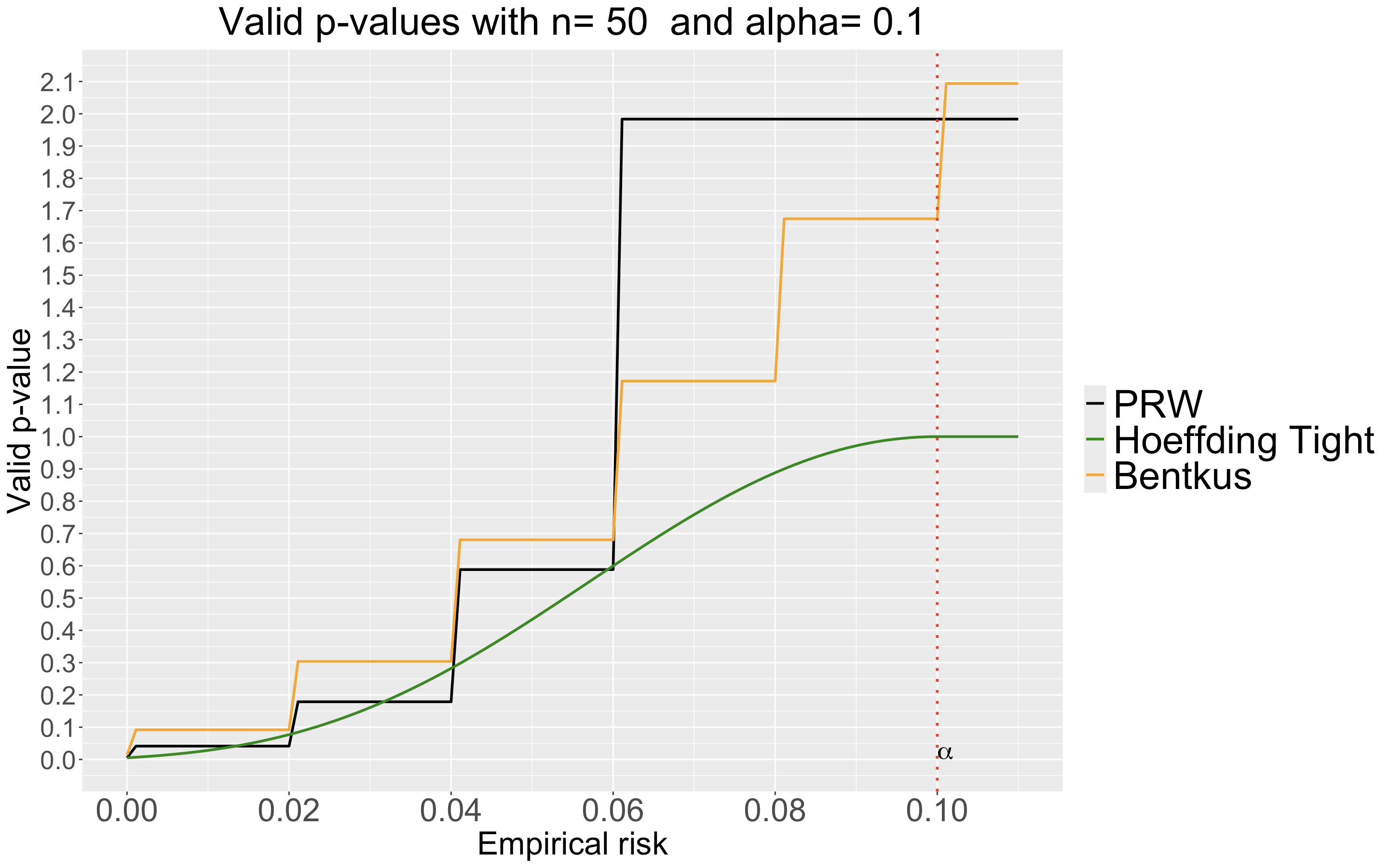}

  \caption{In the same setting as in \ref{hyp}, we consider the following valid p-values as presented in \cite{angelopoulos2022learn} and \cite{bates2021distributionfree}: $p_{Bent}\coloneqq e\Prob\{Bin(n,\alpha)\leq \lceil n\hat{R}\rceil\}$, Bentkus' valid p-value, $p_{HT}\coloneqq exp\Big\{-n\Big(\text{min}\{\hat{R}, \alpha\}log(\frac{\text{min}\{\hat{R}, \alpha\}}{\alpha})+ (1-\text{min}\{\hat{R}, \alpha\})log(\frac{(1-\text{min}\{\hat{R}, \alpha\})}{1-\alpha}) \Big) \Big\}$ Hoeffding's valid p-value (tight version). We consider the valid p-value introduced in this work: $g\Big(\text{min}\{\hat{R}, \frac{\gamma(\alpha)-1}{n}\};\alpha\Big)$, PRW's valid p-value.}
\end{figure}

\begin{table}[ht]
\centering
\begin{tabular}{rrrrr}
  \hline
 \rowcolor{lightgray} $\hat{R}$ & PRW & Hoeffding Tight & Bentkus \\ 
  \hline
  0.0000 & 0.0000 & 0.0000 & 0.0001 \\ 
    0.0015 & 0.0004 & 0.0001 & 0.0009 \\ 
   0.0030 & 0.0004 & 0.0001 & 0.0009 \\ 
 0.0045 & 0.0004 & 0.0002 & 0.0009 \\ 
 0.0061 & 0.0004 & 0.0003 & 0.0009 \\ 
    0.0076 & 0.0004 & 0.0004 & 0.0009 \\ 
 0.0091 & 0.0004 & 0.0006 & 0.0009 \\ 
    0.0106 & 0.0024 & 0.0009 & 0.0053 \\ 
    0.0121 & 0.0024 & 0.0013 & 0.0053 \\ 
    0.0136 & 0.0024 & 0.0018 & 0.0053 \\ 
    0.0152 & 0.0024 & 0.0024 & 0.0053 \\ 
    0.0167 & 0.0024 & 0.0033 & 0.0053 \\ 
    0.0182 & 0.0024 & 0.0043 & 0.0053 \\ 
    0.0197 & 0.0024 & 0.0056 & 0.0053 \\ 
    0.0212 & 0.0109 & 0.0073 & 0.0213 \\ 
    0.0227 & 0.0109 & 0.0093 & 0.0213 \\ 
    0.0242 & 0.0109 & 0.0117 & 0.0213 \\ 
    0.0258 & 0.0109 & 0.0146 & 0.0213 \\ 
    0.0273 & 0.0109 & 0.0180 & 0.0213 \\ 
    0.0288 & 0.0109 & 0.0221 & 0.0213 \\ 
    0.0303 & 0.0379 & 0.0269 & 0.0645 \\ 
    0.0318 & 0.0379 & 0.0325 & 0.0645 \\ 
    0.0333 & 0.0379 & 0.0389 & 0.0645 \\ 
    0.0348 & 0.0379 & 0.0463 & 0.0645 \\ 
    0.0364 & 0.0379 & 0.0548 & 0.0645 \\ 
    0.0379 & 0.0379 & 0.0643 & 0.0645 \\ 
    0.0394 & 0.0379 & 0.0750 & 0.0645 \\ 
    0.0409 & 0.1094 & 0.0869 & 0.1565 \\ 
    0.0424 & 0.1094 & 0.1002 & 0.1565 \\ 
    0.0439 & 0.1094 & 0.1149 & 0.1565 \\ 
      0.0455 & 0.1094 & 0.1310 & 0.1565 \\ 
    0.0470 & 0.1094 & 0.1485 & 0.1565 \\ 
    0.0485 & 0.1094 & 0.1676 & 0.1565 \\ 
    0.0500 & 0.2753 & 0.1881 & 0.3185 \\ 
    0.0515 & 0.2753 & 0.2102 & 0.3185 \\ 
    0.0530 & 0.2753 & 0.2337 & 0.3185 \\ 
    0.0545 & 0.2753 & 0.2587 & 0.3185 \\ 
    0.0561 & 0.2753 & 0.2851 & 0.3185 \\ 
    0.0576 & 0.2753 & 0.3128 & 0.3185 \\ 
    0.0591 & 0.2753 & 0.3417 & 0.3185 \\ 
    0.0606 & 0.6388 & 0.3718 & 0.5601 \\ 
    0.0621 & 0.6388 & 0.4029 & 0.5601 \\ 
    0.0636 & 0.6388 & 0.4349 & 0.5601 \\ 
    0.0652 & 0.6388 & 0.4677 & 0.5601 \\ 
    0.0667 & 0.6388 & 0.5010 & 0.5601 \\ 
   \hline
\end{tabular}
\caption{Values of the p-values across different values of the empirical risk, using $\alpha=0.1, n=100$. Numbers are rounded to 4 decimals.}
\end{table}

\end{document}